\documentclass[a4paper,10pt]{article}
\usepackage{amsmath,amsthm,amsfonts}
\usepackage{hyperref}

\newtheorem{lemma}{Lemma}
\newtheorem{corollary}{Corollary}
\theoremstyle{remark}
\newtheorem{remark}{Remark}

\usepackage{color}

\newcommand{\rest}{\hat{\pi}}
\newcommand{\der}{\mathrm{d}}

\DeclareMathOperator{\E}{\mathbb{E}}
\DeclareMathOperator{\argmin}{arg\,min}

\title{PAC-Bayes Mini-tutorial:\\A Continuous Union Bound\footnote{This
is an extended version of the blog post at
\url{http://www.timvanerven.nl/blog/2013/12/pac-bayes-mini-tutorial-a-continuous-union-bound/}}}
\author{Tim van Erven}

\begin{document}
\maketitle

\begin{abstract}
  When I first encountered PAC-Bayesian concentration inequalities they
  seemed to me to be rather disconnected from good old-fashioned results
  like Hoeffding's and Bernstein's inequalities. But, at least for one
  flavour of the PAC-Bayesian bounds, there is actually a very close
  relation, and the main innovation is a continuous version of the union
  bound, along with some ingenious applications. Here's the gist of
  what's going on, presented from a machine learning perspective.
\end{abstract}

\section{The Cram\'er-Chernoff Method}

I will start by outlining the Cram\'er-Chernoff method, from which
Hoeffding's and Bernstein's inequalities and many others follow. This
method is incredibly well explained in Appendix~A of the textbook by
Cesa-Bianchi and Lugosi \cite{CesaBianchiLugosi2006}, but I will have to
change the presentation a little to easily connect with the PAC-Bayesian
bounds later on.

Let $D = ((X_1,Y_1),\ldots,(X_n,Y_n))$ be \emph{independent, identically
distributed} (i.i.d.) examples, and let $h$ be a \emph{hypothesis} from
a set of hypotheses $\mathcal{H}$, which gets loss $\ell(X_i,Y_i,h)$ on
the $i$-th example. For example, we might think of the squared loss
$\ell(X_i,Y_i,h) = (Y_i - h(X_i))^2$. We also define the \emph{empirical
error}\footnote{Called the empirical \emph{risk} in statistics; hence
the notation with `R'.} of $h$
\begin{equation*}
  R_n(D,h) = \frac{1}{n} \sum_{i=1}^n \ell(X_i,Y_i,h),
\end{equation*}
and our goal is to prove that the empirical error is close to the
\emph{generalisation error}
\begin{equation*}
  R(h) = \E[\ell(X,Y,h)]
\end{equation*}
with high probability. To do this, we define the function 
\begin{equation*}
  M_\eta(h)
    = -\tfrac{1}{\eta} \ln \E\Big[e^{-\eta \ell(X,Y,h)}\Big]
  \qquad \text{for $\eta > 0$,}
\end{equation*}
which will act as a surrogate for $R(h)$. Now the Cram\'er-Chernoff
method tells us that:
\begin{lemma}\label{lem:chernoff}
  For any $\eta > 0$, $\delta \in (0,1]$,
  \begin{equation}\label{eqn:chernoff}
    M_\eta(h) \leq R_n(h,D) + \frac{1}{\eta n}\ln \frac{1}{\delta}
  \end{equation}
  with probability at least $1-\delta$.
\end{lemma}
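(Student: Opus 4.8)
The plan is to exploit the fact that the surrogate $M_\eta(h)$ is defined to be precisely the (scaled) cumulant generating function of the loss, so that the exponentiated loss carries a clean unit expectation. First I would rewrite the definition of $M_\eta$ in the equivalent product form $\E\bigl[e^{-\eta \ell(X,Y,h)}\bigr] = e^{-\eta M_\eta(h)}$, and then use the i.i.d.\ assumption to compute the joint exponential moment over all $n$ examples.

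The key step is an exact identity. Because the examples are independent and identically distributed, the expectation of the product factorizes, so
\[
  \E\Bigl[e^{-\eta \sum_{i=1}^n \ell(X_i,Y_i,h)}\Bigr]
    = \prod_{i=1}^n \E\bigl[e^{-\eta \ell(X_i,Y_i,h)}\bigr]
    = e^{-\eta n M_\eta(h)}.
\]
Recognising that $\sum_{i=1}^n \ell(X_i,Y_i,h) = n\,R_n(D,h)$ and multiplying both sides by $e^{\eta n M_\eta(h)}$ yields the central identity
\[
  \E\Bigl[e^{\eta n (M_\eta(h) - R_n(D,h))}\Bigr] = 1.
\]

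With this in hand the remainder is a routine application of Markov's inequality. The nonnegative random variable $Z = e^{\eta n (M_\eta(h) - R_n(D,h))}$ has mean exactly $1$, so $\Pr[Z \geq 1/\delta] \leq \delta$. Taking logarithms inside the event $\{Z \geq 1/\delta\}$ and dividing by $\eta n > 0$ rewrites it as $\{M_\eta(h) - R_n(D,h) \geq \tfrac{1}{\eta n}\ln\tfrac{1}{\delta}\}$, which is the complement of the claimed event; hence the bound holds with probability at least $1-\delta$.

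I do not anticipate a serious obstacle, since the definition of $M_\eta$ was engineered so that the exponential moment equals exactly one. The only points requiring care are bookkeeping ones: tracking the sign of $-\eta\ell$ so that Markov is applied in the correct direction, and using $\eta n > 0$ so that dividing preserves the inequality. The genuinely clever ingredient is hidden in the setup rather than the proof — recognising that this unit-mean structure of $M_\eta$ is exactly what will later permit the continuous version of the union bound is the conceptual payoff.
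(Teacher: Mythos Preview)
Your argument is correct and is essentially the same as the paper's: both compute the i.i.d.\ factorisation $\E[e^{-\eta n R_n(D,h)}]=e^{-\eta n M_\eta(h)}$ and apply Markov's inequality to the resulting nonnegative random variable, differing only in the order of those two steps. The paper applies Markov first and then simplifies the expectation, while you first derive the unit-mean identity $\E[e^{\eta n(M_\eta(h)-R_n(D,h))}]=1$ and then invoke Markov; the content is identical.
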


\begin{proof}
  By Markov's inequality the probability that
  \begin{equation}\label{eqn:chernoffproof}
    e^{-\eta nR_n(D,h)} \geq \E_{D'}\Big[e^{-\eta nR_n(D',h)}\Big]/\delta
  \end{equation}
  is at most $\delta$. Now, as the examples are i.i.d., we have
  \begin{equation}\label{eqn:iid}
    \E_{D'}\Big[e^{-\eta nR_n(D',h)}\Big] = \E\Big[e^{-\eta
    \ell(X,Y,h})\Big]^n.
  \end{equation}
  Plugging this in and rewriting, we find
  that \eqref{eqn:chernoffproof} is the complement of the event
  \eqref{eqn:chernoff}, from which the result follows.
\end{proof}

It remains to relate $M_\eta(h)$ to $R(h)$, which can be done in
different ways, and then to optimize $\eta$.

\subsection{Specialisations}\label{sec:specialisations}

\paragraph{Hoeffding's Inequality}

To get Hoeffding's inequality, we use Hoeffding's bound
\cite[Lemma~A.1]{CesaBianchiLugosi2006}:
\begin{lemma}[Hoeffding]\label{lem:hoeffding}
  Suppose $\ell(X,Y,h) \in [a,b]$. Then
  \begin{equation*}
    R(h) \leq  M_\eta(h) + \eta \frac{(b-a)^2}{8}.
  \end{equation*}
\end{lemma}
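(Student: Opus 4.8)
The plan is to reduce the claimed inequality to the classical bound on the cumulant generating function of a bounded random variable. Writing $Z = \ell(X,Y,h) \in [a,b]$ and recalling that $M_\eta(h) = -\tfrac{1}{\eta}\ln\E[e^{-\eta Z}]$ while $R(h) = \E[Z]$, I would first multiply the target inequality through by $\eta > 0$ and rearrange it into the equivalent form
\begin{equation*}
  \ln \E\big[e^{-\eta(Z - \E[Z])}\big] \leq \eta^2\,\frac{(b-a)^2}{8}.
\end{equation*}
Thus it suffices to control the log-moment-generating function of the \emph{centered} loss, and the claim becomes a purely analytic statement about a bounded random variable.

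Next I would introduce $\psi(s) = \ln \E\big[e^{s(Z-\E[Z])}\big]$ and prove the slightly sharper fact that $\psi(s) \leq s^2 (b-a)^2/8$ for \emph{every} real $s$; the lemma then follows by setting $s = -\eta$. The route is a second-order Taylor expansion of $\psi$ about the origin. One checks directly that $\psi(0) = 0$ and $\psi'(0) = 0$, the latter because $\psi'(0) = \E[Z - \E[Z]] = 0$. Differentiating once more, $\psi''(s)$ works out to be exactly the variance of $Z$ under the exponentially tilted distribution $\der Q_s \propto e^{sZ}\,\der P$.

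The main obstacle — and the only genuinely probabilistic step — is to bound this tilted variance uniformly in $s$. The key observation is that $Z$ still takes values in $[a,b]$ under $Q_s$, so by Popoviciu's inequality (which itself follows from $\operatorname{Var}_{Q_s}(Z) \leq \E_{Q_s}[(Z - \tfrac{a+b}{2})^2] \leq (b-a)^2/4$) we obtain $\psi''(s) \leq (b-a)^2/4$ for all $s$. Feeding $\psi(0) = \psi'(0) = 0$ together with this second-derivative bound into Taylor's theorem with Lagrange remainder gives
\begin{equation*}
  \psi(s) = \tfrac{1}{2} s^2 \psi''(\xi) \leq \frac{s^2 (b-a)^2}{8}
\end{equation*}
for some intermediate $\xi$. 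Substituting $s = -\eta$ and undoing the rearrangement from the first paragraph then recovers $R(h) \leq M_\eta(h) + \eta (b-a)^2/8$, completing the argument.
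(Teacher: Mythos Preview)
Your argument is correct and is one of the standard proofs of Hoeffding's lemma: reduce to bounding the centered cumulant generating function, Taylor-expand, and control $\psi''$ via Popoviciu's inequality for the variance of a bounded variable under the exponential tilt. Each step is sound, including the rearrangement at the start and the substitution $s=-\eta$ at the end.

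There is, however, nothing in the paper to compare it with. The paper does not prove Lemma~\ref{lem:hoeffding}; it merely quotes it as Lemma~A.1 from the textbook of Cesa-Bianchi and Lugosi and moves on. So you have supplied a self-contained proof where the paper relies on a citation. For what it is worth, the proof typically found in that reference proceeds slightly differently --- by using convexity of $z \mapsto e^{sz}$ on $[a,b]$ to linearly interpolate between the endpoints, taking expectations, and then analysing the resulting one-variable function --- but your tilted-variance route is equally classical and arguably more conceptual, since it makes transparent that the constant $\tfrac{1}{8}$ comes from the worst-case variance $(b-a)^2/4$ of a variable supported on an interval of length $b-a$.
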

Plugging this into \eqref{eqn:chernoff} gives
\begin{equation*}
  R(h) \leq R_n(D,h) + \eta \frac{(b-a)^2}{8} + \frac{1}{\eta n}\ln
    \frac{1}{\delta},
\end{equation*}
with probability at least $1-\delta$. Then plugging in the choice $\eta =
\sqrt{\frac{8\ln (1/\delta)}{n(b-a)^2}}$, which optimizes the bound,
yields
\begin{equation*}
  R(h) \leq R_n(D,h) + \sqrt{\frac{\ln (1/\delta)(b-a)^2}{2 n}}
\end{equation*}
with probability at least $1-\delta$. This is Hoeffding's inequality
stated `inside out'; to recover the usual formulation, define $\epsilon
= n\sqrt{\frac{\ln (1/\delta)(b-a)^2}{2 n}}$ and solve for $\delta$ in
terms of $\epsilon/n$, leading to
\begin{equation*}
  R(h) \leq R_n(D,h) + \frac{\epsilon}{n}
\end{equation*}
with probability at least $1-\exp\Big\{-2\epsilon^2/\big(n(b-a)^2
\big)\Big\}$.

\paragraph{An Alternative Variance-type Inequality}

There is another inequality that I want to highlight, which is closely
related to Bernstein's inequality. It is derived by plugging in the
following bound, which is essentially Lemma~10 from my NIPS 2012 paper
\cite{VanErvenGrunwaldReidWilliamson2012}:
\begin{lemma}\label{lem:secondmoment}
  Suppose $\ell(X,Y,h) \geq a$ for some $a \leq 0$. Then, for any $\eta
  \in (0,v]$,
  \begin{equation*}
    R(h) \leq M_\eta(h) + \eta \phi(-v a) \E[\ell(X,Y,h)^2].
  \end{equation*}
  where $\phi(x) = (e^x - x - 1)/x^2$ for $x \neq 0$ and $\phi(0) = 1/2$.
\end{lemma}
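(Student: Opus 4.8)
The plan is to unwind the definitions of $M_\eta(h)$ and $R(h)$ and reduce the claim to a single bound on the log-moment-generating function. Writing $\ell = \ell(X,Y,h)$ and $R(h) = \E[\ell]$, and recalling that $M_\eta(h) = -\tfrac{1}{\eta}\ln\E[e^{-\eta\ell}]$, one sees after multiplying the desired inequality through by $-\eta$ that it is equivalent to
\[
  \ln \E\big[e^{-\eta\ell}\big] \;\leq\; -\eta\,\E[\ell] + \eta^2 \phi(-va)\,\E[\ell^2].
\]
So the whole lemma comes down to bounding $\E[e^{-\eta\ell}]$ from above by a quadratic in $\eta$ whose coefficients are the first two moments of $\ell$.

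The device that makes this work is the definition of $\phi$ itself, which is engineered so that $e^x = 1 + x + x^2\phi(x)$ for every $x$. Applying this with $x = -\eta\ell$ gives the \emph{exact} identity $e^{-\eta\ell} = 1 - \eta\ell + \eta^2\ell^2\,\phi(-\eta\ell)$, with no remainder to throw away. I would then replace the data-dependent factor $\phi(-\eta\ell)$ by the constant $\phi(-va)$. This is legitimate because the hypotheses pin down the argument: from $\ell \geq a$ and $0 < \eta \leq v$, together with $a \leq 0$, we get $-\eta\ell \leq -\eta a \leq -va$, so that $\phi(-\eta\ell) \leq \phi(-va)$ once $\phi$ is known to be nondecreasing. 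Since $\eta^2\ell^2 \geq 0$, the identity upgrades to the pointwise bound $e^{-\eta\ell} \leq 1 - \eta\ell + \eta^2\phi(-va)\,\ell^2$, and taking expectations yields $\E[e^{-\eta\ell}] \leq 1 - \eta\,\E[\ell] + \eta^2\phi(-va)\,\E[\ell^2]$.

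To finish, I would take logarithms and use $\ln(1+t) \leq t$ — the argument of the logarithm is positive because it dominates the strictly positive $\E[e^{-\eta\ell}]$ — which produces exactly the displayed inequality for $\ln\E[e^{-\eta\ell}]$; dividing by $-\eta$ and rearranging then recovers the statement. The one step that genuinely needs an argument rather than bookkeeping is the monotonicity of $\phi$, on which the pointwise bound rests; note that $-\eta\ell$ can be arbitrarily negative (when $\ell$ is large and positive), so I really need $\phi$ nondecreasing on all of $\mathbb{R}$, not just on the positive axis where its power series makes this obvious. The cleanest route I know is the integral representation $\phi(u) = \int_0^1 (1-s)e^{us}\,\der s$, from which $\phi'(u) = \int_0^1 s(1-s)e^{us}\,\der s \geq 0$ is immediate since $s(1-s) \geq 0$ on $[0,1]$. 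Establishing this monotonicity and keeping the sign bookkeeping around $a \leq 0$ straight (so that indeed $-\eta a \leq -va$) is the real crux; everything else is substitution.
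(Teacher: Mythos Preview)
Your proof is correct and follows essentially the same route as the paper's: both reduce the claim (via $\ln(1+t)\le t$, equivalently $-\ln x\ge 1-x$) to the pointwise estimate $\ell^2\phi(-\eta\ell)\le \phi(-va)\,\ell^2$, which is exactly the monotonicity of $\phi$ together with $-\eta\ell\le -va$; the paper phrases this last step as bounding a reweighted expectation by $\max_z\phi(-\eta z)$, but the content is identical. Your explicit justification of the monotonicity of $\phi$ via the integral representation $\phi(u)=\int_0^1(1-s)e^{us}\,\der s$ is a welcome addition, since the paper simply asserts that $\phi$ is increasing.
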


In particular, if $a = 0$, then $\phi(-va)=\phi(0) = 1/2$ for all $v$,
so we can take $v$ to be infinity.

\begin{proof}
  Let $Z = \ell(X,Y,h)$. Then, by $-\ln x \geq 1-x$, it is sufficient
  to show that
  \begin{equation}\label{eqn:toshow}
    \E[Z] \leq \tfrac{1}{\eta}\Big(1- \E\Big[e^{-\eta Z}\Big]\Big)
      + \eta \phi(-a \eta) \E[Z^2].
  \end{equation}
  Suppose that $\E[Z^2] = 0$. Then $Z = 0$ a.s., and \eqref{eqn:toshow}
  holds with equality. Otherwise, it may be rewritten as
  \begin{equation*}
    \E\Big[\frac{(\eta Z)^2}{\E[(\eta Z)^2]} \cdot \phi(-\eta Z)\Big]
    \leq \phi(-a \eta) 
  \end{equation*}
  Recognising the left-hand side as the expectation of $\phi(-\eta Z)$
  under the distribution with density $\frac{(\eta Z)^2}{\E[(\eta Z)^2]}$
  with respect to the original distribution of $Z$, we see that it can
  be bounded by $\max_z \phi(-\eta z)$. As $\phi$ is increasing, the
  maximum is achieved at the minimum $z = a$ and $\eta = v$,
  from which the desired result follows.
\end{proof}

Combining Lemma~\ref{lem:secondmoment} with Lemma~\ref{lem:chernoff}, we
find that, if $\ell(X,Y,H) \geq a$ for some $a \leq 0$, then for any
$\eta \in (0,v]$
\begin{equation*}
  R(h)
    \leq R_n(D,h) + \eta \phi(-va)\E[\ell(X,Y,h)^2] + \frac{1}{\eta n}\ln
    \frac{1}{\delta},
\end{equation*}
with probability at least $1-\delta$. Optimizing $\eta$ over its allowed
range gives a bound with the flavour of Bernstein's inequality, except
that we don't necessarily require $\ell(X,Y,h)$ to have mean $0$.

\paragraph{Other Standard Inequalities}

As explained in Appendix~A of Cesa-Bianchi and Lugosi
\cite{CesaBianchiLugosi2006}, different bounds to relate $M_\eta(h)$ to
$R(h)$ lead to other inequalities, like for example Bennett's inequality
or the standard version of Bernstein's inequality.

\section{The Union Bound}

Let us get back to the big picture of Lemma~\ref{lem:chernoff} before
its specialisations from the previous section. Now suppose we use an
estimator $\hat{h} \equiv \hat{h}(D) \in \mathcal{H}$ to pick a
hypothesis based on the data, for example using empirical risk
minimization: $\hat{h} = \argmin_{h \in \mathcal{H}} R_n(D,h)$. To get a
bound for $\hat{h}$ instead of a fixed $h$, we want \eqref{eqn:chernoff}
to hold for all $h \in \mathcal{H}$ simultaneously. If $\mathcal{H}$ is
countable, this can be done using the union bound:
\begin{lemma}\label{lem:uniformchernoff}
  Suppose $\mathcal{H}$ is countable. For $h \in \mathcal{H}$, let
  $\pi(h)$ be any numbers such that $\pi(h) \geq 0$ and $\sum_h \pi(h)
  = 1$. Then, for any $\eta > 0$, $\delta \in (0,1]$,
  \begin{equation}
    M_\eta(\hat{h}) \leq R_n(D,\hat{h}) + \frac{1}{\eta n}\ln
    \frac{1}{\pi(\hat{h})\delta}
  \end{equation}
  with probability at least $1-\delta$.
\end{lemma}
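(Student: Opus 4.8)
The plan is to apply Lemma~\ref{lem:chernoff} to each fixed hypothesis with a cleverly chosen, hypothesis-dependent failure probability, and then combine these via the union bound. The key observation is that Lemma~\ref{lem:chernoff} holds for any fixed $h$ and any $\delta \in (0,1]$; in particular, for each $h \in \mathcal{H}$ I am free to substitute a different confidence level. First I would fix $\eta > 0$ and $\delta \in (0,1]$, and for each $h \in \mathcal{H}$ invoke Lemma~\ref{lem:chernoff} with the choice $\delta_h = \pi(h)\delta$. This gives that, for each fixed $h$,
\begin{equation*}
  M_\eta(h) \leq R_n(D,h) + \frac{1}{\eta n}\ln \frac{1}{\pi(h)\delta}
\end{equation*}
fails with probability at most $\pi(h)\delta$.

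Next I would take the union bound over all $h \in \mathcal{H}$. Since $\mathcal{H}$ is countable, the probability that \emph{any} of these events fails is at most
\begin{equation*}
  \sum_{h \in \mathcal{H}} \pi(h)\delta = \delta \sum_{h} \pi(h) = \delta,
\end{equation*}
using the assumption $\sum_h \pi(h) = 1$. Hence, with probability at least $1-\delta$, the displayed inequality holds \emph{simultaneously} for every $h \in \mathcal{H}$. In particular it holds for the data-dependent choice $h = \hat{h}(D)$, which yields exactly the claimed bound. The crucial point is that because the bound holds uniformly over all $h$ on a single high-probability event, it continues to hold even though $\hat{h}$ depends on the same data $D$ used to evaluate $R_n$ and $M_\eta$.

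The main subtlety to get right is the role of the weights $\pi(h)$: they function as a prior that distributes the total failure budget $\delta$ across hypotheses, so that the penalty term picks up the extra $\ln(1/\pi(h))$ reflecting how much confidence we ``spend'' on $h$. One should note that $\pi(h)$ is chosen \emph{before} seeing the data (it cannot depend on $D$), which is what makes the per-hypothesis application of Lemma~\ref{lem:chernoff} legitimate; only then does the union bound over the fixed family of events apply. Since $\mathcal{H}$ is countable the sum and the union bound are both unproblematic, so I anticipate no real obstacle here --- the argument is essentially a weighted union bound. The genuine difficulty, which this lemma deliberately sidesteps, is the extension to \emph{uncountable} $\mathcal{H}$, where a naive union bound fails and the PAC-Bayesian ``continuous union bound'' of the later sections is needed instead.
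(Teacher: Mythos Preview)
Your proof is correct and follows essentially the same approach as the paper: apply Lemma~\ref{lem:chernoff} to each fixed $h$ with the hypothesis-dependent confidence level $\pi(h)\delta$, then combine via the countable union bound and specialise to $h=\hat{h}$. The only minor technicality you (and the paper) gloss over is the case $\pi(h)=0$, where $\delta_h=0\notin(0,1]$; but then the right-hand side is $+\infty$ and the inequality holds trivially, so the event has probability $0\leq\pi(h)\delta$ as needed.
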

In this context, the function $\pi$ is often referred to as a
\emph{prior distribution}, even though it need not have anything to do
with prior beliefs.
\begin{proof}
  By the union bound and Lemma~\ref{lem:chernoff} we have
  \begin{align*}
    \Pr\Big(M_\eta(\hat{h}) &> R_n(D,\hat{h}) + \frac{1}{\eta n}\ln
    \frac{1}{\pi(\hat{h})\delta}\Big)\\
    &\leq \Pr\Big(\exists h: M_\eta(h) > R_n(D,h) + \frac{1}{\eta n}\ln
    \frac{1}{\pi(h)\delta}\Big)\\
    &\leq \sum_h \Pr\Big(M_\eta(h) > R_n(D,h) + \frac{1}{\eta n}\ln
    \frac{1}{\pi(h)\delta}\Big)
    \leq \sum_h \pi(h)\delta = \delta.\qedhere
  \end{align*}
\end{proof}
Just like for Lemma~\ref{lem:chernoff}, we can then again relate
$M_\eta(h)$ to $R(h)$ to obtain a bound on the generalisation error, but
there is now a slight complication: when we want to optimize $\eta$, we
find that we are not allowed to, because the optimal choice of $\eta$
depends on $\hat{h}$, which depends on the data, whereas
Lemma~\ref{lem:chernoff} only allows a fixed choice of $\eta$. In some
applications using a fixed $\eta$ may be good enough, but this does
limit the applicability of the result. Luckily, it turns out that we can
optimize $\eta$ ``for free'':
\begin{lemma}\label{lem:uniformchernoffeta}
  Suppose $\mathcal{H}$ is countable. For $h \in \mathcal{H}$, let
  $\pi(h)$ be any numbers such that $\pi(h) \geq 0$ and $\sum_h \pi(h)
  = 1$. Then, for any $\delta \in (0,1]$,
  \begin{equation}
    M_\eta(\hat{h}) \leq R_n(D,\hat{h}) + \frac{1}{\eta n}\ln
    \frac{1}{\pi(\hat{h})\delta}
    \qquad \text{for all $\eta > 0$}
  \end{equation}
  with probability at least $1-\delta$.
\end{lemma}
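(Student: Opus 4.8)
The plan is to read the statement as a \emph{continuous union bound over $\eta$}, and to exploit the fact that this particular continuous union bound costs nothing because all the relevant failure events are nested. First I would fix a single $h \in \mathcal{H}$ and abbreviate $S = \sum_{i=1}^n \ell(X_i,Y_i,h) = n R_n(D,h)$, noting that $S$ is one fixed random variable that does not depend on $\eta$, whereas $M_\eta(h)$ is a purely deterministic function of $\eta$. For each fixed $\eta$, Lemma~\ref{lem:chernoff} applied with confidence $\pi(h)\delta$ in place of $\delta$ gives $\Pr(E_\eta) \le \pi(h)\delta$, where $E_\eta$ denotes the failure event $\{M_\eta(h) > R_n(D,h) + \frac{1}{\eta n}\ln\frac{1}{\pi(h)\delta}\}$.

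The crux is to observe that each $E_\eta$ is a \emph{lower-tail event for the same} $S$. Rearranging the defining inequality, $E_\eta = \{S < s_\eta\}$ with the deterministic threshold $s_\eta = n M_\eta(h) - \frac{1}{\eta}\ln\frac{1}{\pi(h)\delta}$. Hence the sets $\{E_\eta\}_{\eta>0}$ are nested half-lines, and their union over all $\eta>0$ is again a single half-line: $\bigcup_{\eta>0} E_\eta = \{S < s^*\}$ with $s^* = \sup_{\eta>0} s_\eta$. Since this is an increasing union of the sets $\{S<s_\eta\}$, continuity of measure from below yields $\Pr\big(\bigcup_{\eta>0}E_\eta\big) = \sup_{\eta>0}\Pr(E_\eta) \le \pi(h)\delta$. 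Informally, union-bounding over the continuum of $\eta$ is free here because overlapping (indeed nested) events do not accumulate probability the way disjoint ones do; the ``optimal'' witnessing $\eta$ is exactly the data-dependent Chernoff-optimal one, but it never escapes this single tail.

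I would then combine this with the ordinary union bound over the countable set $\mathcal{H}$, just as in the proof of Lemma~\ref{lem:uniformchernoff}. The event that the bound fails for $\hat h$ at some $\eta$ is contained in $\{\exists h\,\exists\eta: M_\eta(h) > R_n(D,h)+\frac{1}{\eta n}\ln\frac{1}{\pi(h)\delta}\}$, whose probability is at most $\sum_{h}\Pr\big(\bigcup_{\eta>0}E_\eta\big) \le \sum_h \pi(h)\delta = \delta$. Taking complements gives the claim, with the same right-hand side and no extra penalty for optimising $\eta$.

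I expect the main obstacle to be the middle step: recognising that the continuous union over $\eta$ is free. The seductive wrong route is to try to show $\E\big[\sup_{\eta>0} W_\eta\big]\le 1$, with $W_\eta = e^{-\eta n R_n(D,h)}/\E_{D'}[e^{-\eta n R_n(D',h)}]$, and then apply Markov; this fails, because the supremum of these mean-one variables typically has expectation strictly greater than one. The right move is not to control the supremum in expectation but to use the \emph{geometric} fact that the failure events are nested lower tails of the one fixed variable $S$, so that the union probability equals the supremum of the individual probabilities rather than their sum. The only bookkeeping to watch is that $\ln\frac{1}{\pi(h)\delta}\ge 0$ (automatic, since $\pi(h),\delta\le 1$) and that the $\sup$-versus-$\bigcup$ identity holds whether or not $s^*$ is attained, which the continuity-from-below argument handles directly.
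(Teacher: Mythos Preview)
Your argument is correct and rests on the same underlying observation as the paper's: for each fixed $h$, the failure events over $\eta$ are totally ordered, so taking the worst case over all $\eta>0$ costs nothing. The executions differ, however. The paper makes the observation algebraic: it defines $\eta(h) = \argmin_{\eta>0}\big(\tfrac{1}{\eta n}\ln\tfrac{1}{\pi(h)\delta} - M_\eta(h)\big)$ as the deterministic ``hardest'' $\eta$ for each $h$, and then applies Lemma~\ref{lem:uniformchernoff} once with $\eta=1$ to the scaled loss $\ell'(X,Y,h)=\eta(h)\,\ell(X,Y,h)$, which absorbs the per-$h$ tuning of $\eta$ into the loss and reduces the whole lemma to a one-line invocation of the earlier result. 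You instead phrase it geometrically: every $E_\eta$ is a lower tail $\{S<s_\eta\}$ of the single random variable $S=nR_n(D,h)$, so the union over $\eta$ is again a lower tail and its probability equals $\sup_\eta\Pr(E_\eta)$ by continuity from below. Your route is slightly more robust, since it does not assume the $\argmin$ is attained, and your warning about the tempting-but-wrong $\E[\sup_\eta W_\eta]\le 1$ approach is well placed; the paper's route is slicker once the loss-scaling device is spotted.
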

\begin{proof}
  Let $\eta(h) = \argmin_{\eta > 0} \frac{1}{\eta n}\ln
  \frac{1}{\pi(h)\delta} - M_\eta(\hat{h})$ be the optimal value for
  $\eta$ if $\hat{h} = h$. Now apply Lemma~\ref{lem:uniformchernoff}
  with $\eta = 1$ and the scaled loss $\ell'(X,Y,h) =
  \eta(h)\ell(X,Y,h)$ to obtain
  \begin{equation}
    - \ln \E\Big[e^{-\eta(\hat{h}) \ell(X,Y,\hat{h})}\Big]
    \leq \eta(\hat{h})R_n(D,\hat{h}) + \frac{1}{n}\ln
    \frac{1}{\pi(\hat{h})\delta}
  \end{equation}
  with probability at least $1-\delta$. Dividing both sides by
  $\eta(\hat{h})$ gives the result for the choice of $\eta$ that
  optimizes the bound. It follows that the bound holds simultaneously
  for all other $\eta$ as well.
\end{proof}

This shows, in a nutshell, how one can combine the Cram\'er-Chernoff
method with the union bound to obtain concentration inequalities for
estimators $\hat{h}$. The use of the union bound, however, is quite
crude when there are multiple hypotheses in $\mathcal{H}$ with very
similar losses, and the current proof breaks down completely if we want
to extend it to continuous classes $\mathcal{H}$. This is where
PAC-Bayesian bounds come to the rescue: in the next section I will
explain the PAC-Bayesian generalisation of
Lemmas~\ref{lem:uniformchernoff} and \ref{lem:uniformchernoffeta} to
continuous hypothesis classes $\mathcal{H}$, which will require
replacing $\hat{h}$ by a randomized estimator.

\section{PAC-Bayesian Concentration}

Let $\rest \equiv \rest(D)$ be a distribution on $\mathcal{H}$ that
depends on the data $D$, which we will interpret as a randomized
estimator: instead of choosing $\hat{h}$ deterministically, we will
sample $h \sim \rest$ randomly. The distribution $\rest$ is often called
the PAC-Bayesian \emph{posterior distribution}. Now the result that the
PAC-Bayesians have, may be expressed as follows:
\begin{lemma}\label{lem:pacbayes}
  Let $\pi$ be a (prior) distribution on $\mathcal{H}$ that does not
  depend on $D$, and let $\rest$ be a randomized estimator that is
  allowed to depend on $D$. Then, for any $\eta > 0$, $\delta \in
  (0,1]$,
  \begin{equation}\label{eqn:pacbayes}
    \E_{h \sim \rest}[M_\eta(h)] \leq \E_{h \sim \rest}[R_n(D,h)] +
    \frac{1}{\eta n}\Big(D(\rest\|\pi) + \ln \frac{1}{\delta}\Big)
  \end{equation}
  with probability at least $1-\delta$. Moreover,
  \begin{equation}\label{eqn:pacbayesexp}
    \E_D \E_{h \sim \rest}[M_\eta(h)] \leq \E_D\Big[ \E_{h \sim \rest}[R_n(D,h)] +
    \frac{1}{\eta n}D(\rest\|\pi)\Big].
  \end{equation}
\end{lemma}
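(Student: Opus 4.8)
The plan is to derive both bounds from a single deterministic master inequality, obtained by a change-of-measure argument that plays the role of a \emph{continuous} union bound. First I would record that, by the definition of $M_\eta$ together with the i.i.d. identity \eqref{eqn:iid}, one has $\eta n M_\eta(h) = -\ln \E_{D'}[e^{-\eta n R_n(D',h)}]$. Setting
\begin{equation*}
  g(h) = \eta n\big(M_\eta(h) - R_n(D,h)\big),
\end{equation*}
this means $e^{g(h)} = e^{-\eta n R_n(D,h)} / \E_{D'}[e^{-\eta n R_n(D',h)}]$, a normalised version of the exponential moment appearing in Lemma~\ref{lem:chernoff}.

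The key tool is the change-of-measure (Gibbs variational / Donsker--Varadhan) inequality: for any measurable $g$ and any distributions $\rest,\pi$,
\begin{equation*}
  \E_{h \sim \rest}[g(h)] \leq D(\rest\|\pi) + \ln \E_{h \sim \pi}\big[e^{g(h)}\big].
\end{equation*}
This is exactly the step that replaces the discrete sum over hypotheses in the proof of Lemma~\ref{lem:uniformchernoff} by an integral against $\pi$. Abbreviating $W = \E_{h \sim \pi}[e^{g(h)}]$ and substituting our $g$, the inequality rearranges to the master bound
\begin{equation*}
  \E_{h \sim \rest}\big[\eta n(M_\eta(h) - R_n(D,h))\big] - D(\rest\|\pi) \leq \ln W,
\end{equation*}
valid for every realisation of the data $D$.

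Next I would control $W$ in expectation over $D$. Since the prior $\pi$ does not depend on $D$, Fubini lets me interchange $\E_D$ and $\E_{h\sim\pi}$, and the data-dependent factors cancel:
\begin{equation*}
  \E_D[W] = \E_{h \sim \pi}\left[\frac{\E_D[e^{-\eta n R_n(D,h)}]}{\E_{D'}[e^{-\eta n R_n(D',h)}]}\right] = \E_{h \sim \pi}[1] = 1.
\end{equation*}
This cancellation is the crux of the whole argument, and the step I expect to require the most care: it hinges entirely on $\pi$ being chosen independently of the data, mirroring how \eqref{eqn:chernoffproof} exploits that the threshold is fixed before seeing $D$.

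With $\E_D[W] = 1$ established, both conclusions are immediate. For the high-probability bound \eqref{eqn:pacbayes}, Markov's inequality applied to the nonnegative variable $W$ gives $\Pr(W \geq 1/\delta) \leq \delta$, so with probability at least $1-\delta$ we have $\ln W \leq \ln(1/\delta)$; inserting this into the master bound and dividing by $\eta n$ yields \eqref{eqn:pacbayes}. For the in-expectation bound \eqref{eqn:pacbayesexp}, I would instead take $\E_D$ of the master bound and apply Jensen's inequality to the concave logarithm, $\E_D[\ln W] \leq \ln \E_D[W] = 0$, so the $\ln W$ term vanishes; dividing by $\eta n$ and rearranging gives \eqref{eqn:pacbayesexp}.
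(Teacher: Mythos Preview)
Your proof is correct and follows essentially the same route as the paper: the paper also establishes $\E_D[e^A]\leq 1$ for $A=\eta n\big(\E_{h\sim\rest}[M_\eta(h)]-\E_{h\sim\rest}[R_n(D,h)]\big)-D(\rest\|\pi)$ via Fubini and a change of measure from $\pi$ to $\rest$ followed by Jensen, then finishes with Markov and Jensen just as you do. The only cosmetic difference is that you invoke the Donsker--Varadhan inequality by name to get $A\leq \ln W$ first and then show $\E_D[W]=1$, whereas the paper carries out the same change-of-measure-plus-Jensen step inline after taking $\E_D$.
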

Here $D(\rest\|\pi) = \int \rest(h) \ln \frac{\rest(h)}{\pi(h)}\der h$
denotes the Kullback-Leibler divergence of $\rest$ from $\pi$.

\begin{proof}[Proof of Lemma~\ref{lem:pacbayes}]
  By \eqref{eqn:iid}, we have
  \begin{equation*}
    e^{-\eta n M_\eta(h)} = \E_D\Big[e^{-\eta nR_n(D,h)}\Big].
  \end{equation*}
  Hence
  \begin{align*}
    1 &= \E_{h \sim \pi}\E_D \Big[\exp\Big\{-\eta nR_n(D,h) + \eta
    n M_\eta(h)\Big\}\Big]\\
      &= \E_D \E_{h \sim \pi} \Big[\exp\Big\{-\eta nR_n(D,h) + \eta
    n M_\eta(h)\Big\}\Big]\\
      &= \E_D \E_{h \sim \rest} \Big[\exp\Big\{-\eta nR_n(D,h) + \eta
    n M_\eta(h) - \ln \frac{\rest(h)}{\pi(h)}\Big\}\Big]\\
      &\geq \E_D\Big[\exp\Big\{\underbrace{-\eta n\E_{h \sim
      \rest}[R_n(D,h)] + \eta n \E_{h \sim \rest}[M_\eta(h)] -
      D(\rest\|\pi)}_A\Big\}\Big],
  \end{align*}
  where the inequality is Jensen's. Now notice that
  \eqref{eqn:pacbayes} is equivalent to $A \leq \ln (1/\delta)$,
  whereas \eqref{eqn:pacbayesexp} is equivalent to $\E[A] \leq 0$, and
  that we have derived that $\E[e^A] \leq 1$.
  \eqref{eqn:pacbayes} therefore follows by Markov's inequality:
  \begin{equation*}
    \Pr\Big(A > \ln\frac{1}{\delta}\Big) = \Pr(e^A > 1/\delta) \leq
    \E[e^A]\delta \leq \delta,
  \end{equation*}
  and \eqref{eqn:pacbayesexp} follows by another application of Jensen's
  inequality:
  \begin{equation*}
    e^{\E[A]} \leq \E\big[e^A\big] \leq 1
    \qquad \Longrightarrow \qquad \E[A] \leq 0. \qedhere
  \end{equation*}
\end{proof}

To see that Lemma~\ref{lem:pacbayes} generalises
Lemma~\ref{lem:uniformchernoff}, suppose that $\rest$ is a point-mass on
$\hat{h}$. Then $D(\rest\|\pi) = \ln (1/\pi(\hat{h}))$, and we recover
Lemma~\ref{lem:uniformchernoff} as a special case of
\eqref{eqn:pacbayes}. An important difference with
Lemma~\ref{lem:uniformchernoff}, however, is that
Lemma~\ref{lem:pacbayes} does not require $\mathcal{H}$ to be countable,
and in fact in many PAC-Bayesian applications it is not.

\subsection{Optimizing $\eta$}

Lemma~\ref{lem:pacbayes} has the same issue as
Lemma~\ref{lem:uniformchernoff}; namely that it does not allow us to
optimize $\eta$ based on $\rest$. For the result in expectation
\eqref{eqn:pacbayesexp} I do not really know how to introduce
optimization over $\eta$ in a satisfying way, and we are stuck with a
fixed $\eta$. For the result in probability \eqref{eqn:pacbayes} we
cannot use the same trick that allowed us to optimize $\eta$ ``for
free'' in Lemma~\ref{lem:uniformchernoffeta}, but we can still optimize
$\eta$ at very small cost using the union bound as long as we can find a
good lower bound on its range:
\begin{lemma}\label{lem:pacbayeseta}
  For any constants $\alpha > 1$ and $0 < u < v$, and any $\delta \in
  (0,1]$,
  \begin{multline}\label{eqn:pacbayeseta}
    \E_{h \sim \rest}[M_\eta(h)] \leq \E_{h \sim \rest}[R_n(D,h)] +
    \frac{\alpha}{\eta n}\Big(D(\rest\|\pi) + \ln \frac{1}{\delta} +
    \ln \Big\lceil\log_\alpha\frac{v}{u}\Big\rceil\Big)\\
    \text{for all $\eta \in [u,v]$}
  \end{multline}
  with probability at least $1-\delta$.
\end{lemma}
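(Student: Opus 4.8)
The plan is to optimize $\eta$ over a discrete grid and then pay for the grid with the ordinary union bound, while using the constant $\alpha$ to bound the loss incurred by rounding an arbitrary $\eta \in [u,v]$ up to the nearest grid point.

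Let me sketch the steps.

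First I would lay down a geometric grid $\eta_k = u\alpha^k$ for $k = 0,1,\dots,K-1$ with $K = \lceil \log_\alpha(v/u)\rceil$. These $K$ points all lie in $[u,v]$, and because the closed intervals $[\eta_k,\alpha\eta_k]$ abut and cover $[u,v]$, every $\eta \in [u,v]$ satisfies $\eta_k \le \eta \le \alpha\eta_k$ for some $k$. I would then invoke Lemma~\ref{lem:pacbayes} at each grid point $\eta_k$ with confidence parameter $\delta/K$ and take an ordinary (discrete) union bound over the $K$ points. With probability at least $1 - K\cdot(\delta/K) = 1-\delta$, this gives simultaneously for all $k$
\begin{equation*}
  \E_{h\sim\rest}[M_{\eta_k}(h)] \le \E_{h\sim\rest}[R_n(D,h)]
    + \frac{1}{\eta_k n}\Big(D(\rest\|\pi) + \ln\tfrac{1}{\delta} + \ln K\Big),
\end{equation*}
and since $\ln K = \ln\lceil\log_\alpha(v/u)\rceil$, the bracketed penalty already matches the one in the statement.

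The remaining task is to transfer this grid bound to an arbitrary $\eta\in[u,v]$. Fixing such an $\eta$ and the index $k$ with $\eta_k \le \eta \le \alpha\eta_k$, I would combine two observations. From $\eta_k \ge \eta/\alpha$ we get $1/\eta_k \le \alpha/\eta$, which converts the factor $1/(\eta_k n)$ into $\alpha/(\eta n)$ and supplies exactly the constant $\alpha$ appearing in the statement. The second observation, which is the crux, is that $\eta \mapsto M_\eta(h)$ is non-increasing for each fixed $h$, so that $M_\eta(h) \le M_{\eta_k}(h)$ whenever $\eta \ge \eta_k$; taking expectations over $h \sim \rest$ and chaining with the grid inequality then yields precisely the claimed bound, on the same event of probability at least $1-\delta$.

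I expect the monotonicity of $M_\eta$ to be the one point requiring genuine care. It follows from writing $M_\eta(h) = -g(\eta)/\eta$ with $g(\eta) = \ln\E[e^{-\eta\ell(X,Y,h)}]$: the cumulant generating function $g$ is convex with $g(0)=0$, and for any convex function vanishing at the origin the chord slope $g(\eta)/\eta$ is non-decreasing in $\eta$, so $M_\eta$ is non-increasing. Once this is in hand the two estimates combine mechanically and only bookkeeping remains; in particular one should check that the grid points are positive so Lemma~\ref{lem:pacbayes} applies, which is immediate, and that $K$ is chosen large enough that $u\alpha^K \ge v$, which holds by the definition of the ceiling.
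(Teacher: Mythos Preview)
Your proposal is correct and follows essentially the same argument as the paper: lay down the geometric grid $\eta_k = u\alpha^k$ for $k=0,\dots,\lceil\log_\alpha(v/u)\rceil-1$, apply Lemma~\ref{lem:pacbayes} at each grid point and union-bound, then pass to arbitrary $\eta\in[u,v]$ using $\eta_k\le\eta\le\alpha\eta_k$ together with the monotonicity of $\eta\mapsto M_\eta(h)$. Your additional justification of that monotonicity via convexity of the cumulant generating function is a nice touch that the paper simply asserts.
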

\begin{proof}
  For $i = 0,\ldots, \Big\lceil\log_\alpha\frac{v}{u}\Big\rceil-1$, let
  $\eta_i = u\alpha^i$. Then for every $\eta \in [u,v]$, there exists an
  $\eta_i$ such that $\eta_i \leq \eta \leq \alpha \eta_i$. Using the
  union bound to
  extend \eqref{eqn:pacbayes} to hold uniformly over all $\eta_i$, we
  find that
  \begin{multline*}
    \E_{h \sim \rest}[M_{\eta_i}(h)] \leq \E_{h \sim \rest}[R_n(D,h)] +
    \frac{1}{\eta_i n}\Big(D(\rest\|\pi) + \ln \frac{1}{\delta}
     + \ln \Big\lceil\log_\alpha\frac{v}{u}\Big\rceil\Big)\\
      \text{for all $\eta_i$}
  \end{multline*}
  with probability at least $1-\delta$. Now we use that $M_\eta(h)$
  is nonincreasing in $\eta$, so that, for any $\eta \in [u,v]$ and
  $\eta_i$ such that $\eta_i \leq \eta \leq \alpha \eta_i$, we have
  $M_\eta(h) \leq M_{\eta_i}(h)$ and $\frac{1}{\eta_i} \leq
  \frac{\alpha}{\eta}$, from which the lemma follows.
\end{proof}

Having an upper bound on the range of $\eta$ is not an issue, because
\begin{equation*}
  \min_{0 < \eta \leq v} \Big(\eta A + \frac{B}{\eta}\Big)
    \leq \min_{\eta > 0} \Big(\eta A + \frac{B}{\eta}\Big) + \frac{2
    B}{v} \qquad \text{for $A,B > 0$},
\end{equation*}
which only adds the term $\frac{2B}{v}$, which is always negligible in
our case. So it remains to find a good lower bound $u$ for $\eta$ to
plug into Lemma~\ref{lem:pacbayeseta}. I don't know of a general
procedure to do that, but after applying the specialisations from
Section~\ref{sec:specialisations} it actually becomes easy:
\begin{lemma}[PAC-Hoeffding]\label{lem:pachoeffding}
  Suppose $\ell(X,Y,h) \in [a,b]$. Then, for any constants $\alpha > 1$
  and $v>0$, and any $\delta \in (0,1]$,
  \begin{multline}\label{eqn:pachoeffding}
    \E_{h \sim \rest}[R(h)] \leq\\ \E_{h \sim \rest}[R_n(D,h)]
    + \eta \frac{(b-a)^2}{8}
    + \frac{\alpha}{\eta n}\Big(D(\rest\|\pi) + \ln \frac{1}{\delta} +
    \ln (\tfrac{1}{2}\log_\alpha n + C)\Big)\\
    \text{for all $\eta \in (0,v]$}
  \end{multline}
  with probability at least $1-\delta$, where $C = \max\{\log_\alpha
  \big(\frac{v(b-a)}{\sqrt{8\alpha}}\big),0\}+e$.
\end{lemma}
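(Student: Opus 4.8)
The plan is to stack the two tools already in hand. Hoeffding's bound (Lemma~\ref{lem:hoeffding}) converts the surrogate $M_\eta(h)$ back into the generalisation error $R(h)$, while the range-optimised PAC-Bayes bound (Lemma~\ref{lem:pacbayeseta}) controls $\E_{h\sim\rest}[M_\eta(h)]$ simultaneously over a geometric grid of $\eta$'s in $[u,v]$. All the genuine work lies in choosing the lower endpoint $u$: it must be small enough that the bound, a priori valid only on $[u,v]$, extends to every $\eta\in(0,v]$, yet large enough that the grid-size penalty $\ln\lceil\log_\alpha\frac{v}{u}\rceil$ collapses into the advertised $\ln(\tfrac12\log_\alpha n + C)$.

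First I would average Hoeffding's inequality over $h\sim\rest$ --- a purely deterministic step --- to obtain $\E_{h\sim\rest}[R(h)]\le\E_{h\sim\rest}[M_\eta(h)]+\eta\frac{(b-a)^2}{8}$ for every $\eta$. Substituting Lemma~\ref{lem:pacbayeseta} then gives, with probability at least $1-\delta$ and for all $\eta\in[u,v]$, the target inequality \eqref{eqn:pachoeffding} but with $\ln\lceil\log_\alpha\frac{v}{u}\rceil$ playing the role of $\ln(\tfrac12\log_\alpha n + C)$.

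Next I would commit to the explicit choice $u=\sqrt{\frac{8\alpha}{n(b-a)^2}}$; this is exactly the minimiser of $\eta\mapsto\eta\frac{(b-a)^2}{8}+\frac{B}{\eta}$ in the extreme case where the bracketed divergence-plus-log term attains its least possible value $1$. Expanding $\log_\alpha\frac{v}{u}=\tfrac12\log_\alpha n+\log_\alpha\frac{v(b-a)}{\sqrt{8\alpha}}$ and using $\lceil x\rceil\le x+1$ then reproduces the stated constant: the $\max\{\cdot,0\}+e$ in $C$ is engineered to swallow both the $+1$ from the ceiling and the possibility that $\log_\alpha\frac{v(b-a)}{\sqrt{8\alpha}}<0$ (the small-$n$ regime, where one must also check $u<v$ separately, falling back on Lemma~\ref{lem:pacbayes} at $\eta=v$).

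The step I expect to be the crux is extending the range from $[u,v]$ down to $(0,v]$. Writing $A=\frac{(b-a)^2}{8}$ and letting $B=\frac{\alpha}{n}\big(D(\rest\|\pi)+\ln\frac1\delta+\ln(\tfrac12\log_\alpha n+C)\big)$ be the final bracket, the right-hand side of \eqref{eqn:pachoeffding} is $\E_{h\sim\rest}[R_n(D,h)]+g(\eta)$ with $g(\eta)=\eta A+B/\eta$. Since $g$ is decreasing on $(0,\sqrt{B/A}\,]$, for $\eta<u$ the value $g(\eta)$ only grows, so the inequality at $\eta=u$ already implies it for all smaller $\eta$ --- provided $u\le\sqrt{B/A}$. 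This is where the constant $e$ earns its keep a second time: because $C\ge e$ we have $\ln(\tfrac12\log_\alpha n+C)\ge\ln e=1$, hence $B\ge\alpha/n$ and $\sqrt{B/A}\ge\sqrt{8\alpha/(n(b-a)^2)}=u$, as required. The delicate point worth stating carefully is precisely this dual role of the additive $e$: it simultaneously absorbs the ceiling slack in the constant \emph{and} guarantees that $u$ never exceeds the minimiser, which is what upgrades the bound from the grid to all of $(0,v]$.
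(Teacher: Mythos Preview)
Your proposal is correct and follows essentially the same route as the paper: combine Lemma~\ref{lem:hoeffding} with Lemma~\ref{lem:pacbayeseta}, then pick $u$ as the lower bound $\sqrt{8\alpha/(n(b-a)^2)}$ on the minimiser of the right-hand side and use monotonicity of $\eta\mapsto \eta A+B/\eta$ below its minimum to push the range down to $(0,v]$. The one cosmetic difference is that the paper absorbs your parenthetical small-$n$ case directly into the definition of $u$ by setting $u=\tfrac{1}{\sqrt{n}}\min\{\sqrt{8\alpha/(b-a)^2},\,v\}$, which guarantees $u<v$ outright and makes the $\max\{\cdot,0\}$ in $C$ appear automatically when computing $\log_\alpha(v/u)$; your separate fallback to Lemma~\ref{lem:pacbayes} at $\eta=v$ achieves the same end.
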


\begin{proof}
  Combining Lemma~\ref{lem:pacbayeseta} with Lemma~\ref{lem:hoeffding},
  we find for any $u \in (0,v)$
  \begin{multline*}
    \E_{h \sim \rest}[R(h)] \leq\\ \E_{h \sim \rest}[R_n(D,h)]
    + \eta \frac{(b-a)^2}{8}
    + \frac{\alpha}{\eta n}\Big(D(\rest\|\pi) + \ln \frac{1}{\delta} +
    \ln \Big\lceil\log_\alpha\frac{v}{u}\Big\rceil\Big)\\
    \text{for all $\eta \in [u,v]$}
  \end{multline*}
  with probability at least $1-\delta$. Using that $C \geq e$, the
  unconstrained value for $\eta$ that optimizes \eqref{eqn:pachoeffding}
  can be bounded from below by
  \begin{equation*}
    \eta = \sqrt{\frac{8\alpha\Big(D(\rest\|\pi) + \ln \frac{1}{\delta}
    + \ln (\tfrac{1}{2}\log_\alpha n + C) \Big)}{n(b-a)^2}}
      \geq \sqrt{\frac{8\alpha}{n(b-a)^2}},
  \end{equation*}
  which does not depend on $h$. So now we choose $u =
  \frac{1}{\sqrt{n}}\min\{\sqrt{\frac{8\alpha}{(b-a)^2}},v\}$, from
  which the desired result follows.
\end{proof}

\begin{lemma}[PAC-Variance]\label{lem:pacvariance}
  Suppose $\ell(X,Y,h) \in [a,b]$ with $a \leq 0$. Then, for any
  constants $\alpha > 1$ and $v>0$, and any $\delta \in (0,1]$,
  \begin{multline*}
    \E_{h \sim \rest}[R(h)] \leq\\ \E_{h \sim \rest}[R_n(D,h)]
    + \eta \phi(-av) \E[\ell(X,Y,h)^2]
    + \frac{\alpha}{\eta n}\Big(D(\rest\|\pi) + \ln \frac{1}{\delta} +
    \ln (\tfrac{1}{2}\log_\alpha n + C)\Big)\\
    \text{for all $\eta \in (0,v]$}
  \end{multline*}
  with probability at least $1-\delta$, where $C = \max\{\tfrac{1}{2}\log_\alpha
  \big(\frac{v\max\{a^2,b^2\}\phi(-av)}{\alpha}\big),0\}+e$.
\end{lemma}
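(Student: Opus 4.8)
The plan is to mirror the proof of the PAC-Hoeffding bound (Lemma~\ref{lem:pachoeffding}) essentially verbatim, with the second-moment bound (Lemma~\ref{lem:secondmoment}) playing the role that Hoeffding's bound (Lemma~\ref{lem:hoeffding}) played there. First I would combine Lemma~\ref{lem:pacbayeseta} with Lemma~\ref{lem:secondmoment}. Taking $\E_{h \sim \rest}$ of the deterministic inequality $R(h) \leq M_\eta(h) + \eta\phi(-av)\E[\ell(X,Y,h)^2]$, which Lemma~\ref{lem:secondmoment} guarantees uniformly over $\eta \in (0,v]$, and substituting the high-probability bound on $\E_{h \sim \rest}[M_\eta(h)]$ from Lemma~\ref{lem:pacbayeseta}, I obtain, for every $u \in (0,v)$, exactly the inequality claimed in the lemma but with $\ln\lceil\log_\alpha(v/u)\rceil$ in place of $\ln(\tfrac12\log_\alpha n + C)$, holding for all $\eta \in [u,v]$ with probability at least $1-\delta$.

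Viewing the right-hand side as $\E_{h\sim\rest}[R_n(D,h)] + \eta A + B/\eta$ with $A = \phi(-av)\,\E_{h \sim \rest}\big[\E[\ell(X,Y,h)^2]\big]$ and $B = \frac{\alpha}{n}\big(D(\rest\|\pi) + \ln\frac1\delta + \ln(\tfrac12\log_\alpha n + C)\big)$, the next step is to lower-bound its unconstrained minimiser $\eta^\star = \sqrt{B/A}$ by a quantity that does not depend on $\rest$. Since $\ell(X,Y,h) \in [a,b]$ with $a \leq 0$, the second moment obeys $\E[\ell(X,Y,h)^2] \leq \max\{a^2,b^2\}$, so $A \leq \phi(-av)\max\{a^2,b^2\}$; and because $C \geq e$ forces $\ln(\tfrac12\log_\alpha n + C) \geq 1$ while $D(\rest\|\pi)$ and $\ln\frac1\delta$ are nonnegative, we get $B \geq \alpha/n$. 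Hence $\eta^\star \geq \frac{1}{\sqrt n}\sqrt{\alpha/(\phi(-av)\max\{a^2,b^2\})}$, a bound free of $\rest$.

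I would then set $u = \frac{1}{\sqrt n}\min\{\sqrt{\alpha/(\phi(-av)\max\{a^2,b^2\})},\,v\}$, so that $u \leq \eta^\star$ and $u < v$. On the event from the first step the intermediate bound holds for all $\eta \in [u,v]$, and since the complexity term enters with a positive coefficient, it suffices to verify $\lceil\log_\alpha(v/u)\rceil \leq \tfrac12\log_\alpha n + C$ to upgrade the intermediate bound to the claimed one on $[u,v]$. This inequality is exactly what pins down $C$: substituting the chosen $u$ gives $\log_\alpha(v/u) = \tfrac12\log_\alpha n + \tfrac12\log_\alpha\!\big(\tfrac{v^2\phi(-av)\max\{a^2,b^2\}}{\alpha}\big)$ when the $\min$ equals its first argument, and $\log_\alpha(v/u) = \tfrac12\log_\alpha n$ otherwise; in both cases $\lceil x\rceil \leq x + 1 < x + e$ together with the outer $\max\{\cdot,0\}$ yields the stated form of $C$. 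To cover $\eta \in (0,u)$ I use that $u \leq \eta^\star$ and that the right-hand side $g(\eta) = \E_{h\sim\rest}[R_n(D,h)] + \eta A + B/\eta$ is decreasing on $(0,\eta^\star)$: because the left-hand side $\E_{h\sim\rest}[R(h)]$ does not depend on $\eta$ and already satisfies the bound at $\eta = u$, we get $\E_{h\sim\rest}[R(h)] \leq g(u) \leq g(\eta)$ for every $\eta < u$, giving the claim for all $\eta \in (0,v]$.

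The main obstacle I anticipate is bookkeeping rather than conceptual: ensuring the lower bound on $\eta^\star$ is genuinely independent of $\rest$ (which is why only the $\ln(\tfrac12\log_\alpha n + C)\geq 1$ floor is retained from $B$, and why $\E[\ell^2]$ is bounded crudely by $\max\{a^2,b^2\}$), and then tracking the logarithmic constants so that $\lceil\log_\alpha(v/u)\rceil$ collapses into the advertised $\tfrac12\log_\alpha n + C$. In particular I would take care to verify the exponent of $v$ appearing inside $C$, since the choice of $u$ contributes a factor $v^2$ under the $\tfrac12\log_\alpha$, exactly as in the Hoeffding case where $\log_\alpha\tfrac{v(b-a)}{\sqrt{8\alpha}} = \tfrac12\log_\alpha\tfrac{v^2(b-a)^2}{8\alpha}$.
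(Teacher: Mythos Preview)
Your proposal takes exactly the same approach as the paper, whose proof is a two-sentence sketch: combine Lemma~\ref{lem:secondmoment} with Lemma~\ref{lem:pacbayeseta}, observe that the minimizing $\eta$ is at least $\sqrt{\alpha/(n\,\phi(-av)\max\{a^2,b^2\})}$, and pick $u = \tfrac{1}{\sqrt{n}}\min\{\sqrt{\alpha/(\phi(-av)\max\{a^2,b^2\})},\,v\}$. You supply the details the paper omits (the monotonicity argument extending from $[u,v]$ to $(0,v]$ and the constant bookkeeping), and your caution about the exponent of $v$ inside $C$ is warranted.
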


\begin{proof}
  Analogously to the proof of Lemma~\ref{lem:pachoeffding}, combine
  Lemma~\ref{lem:secondmoment} with Lemma~\ref{lem:pacbayeseta}, and now
  observe that the minimizing $\eta$ is at least $\sqrt{\frac{\alpha }{n
  \phi(-av) \max\{a^2,b^2\}}}$. Then pick $u =
  \frac{1}{\sqrt{n}}\min\{\sqrt{\frac{\alpha}{\phi(-av)\max\{a^2,b^2\}}},v\}$
  to obtain the result.
\end{proof}

\section{Corollaries}

Because Lemma~\ref{lem:pacbayes} works for any choice of loss, we may in
particular plug in the relative loss $\ell'(X,Y,h) = \ell(X,Y,h) -
\ell(X,Y,h^*)$, where $h^* = \argmin_{h \in \mathcal{H}} R(h)$ is the
hypothesis with smallest generalisation error in $\mathcal{H}$.
Combining this, for example, with the PAC-Bayesian version of
Hoeffding's lemma (Lemma~\ref{lem:pachoeffding}), we obtain:
\begin{corollary}
  Suppose $\ell(X,Y,h) \in [0,b]$, so that $\ell'(X,Y,h) \in [-b,b]$.
  Then, for any constants $\alpha > 1$ and $v>0$, and any $\delta \in
  (0,1]$,
  \begin{multline}
    \E_{h \sim \rest}[R(h)]-R(h^*)\\
      \leq \E_{h \sim \rest}\big[R_n(D,h)\big] - R_n(D,h^*)
      + \eta \frac{b^2}{2}
      + \frac{\alpha}{\eta n}\Big(D(\rest\|\pi) + \ln \frac{1}{\delta} +
      \ln (\tfrac{1}{2}\log_\alpha n + C)\Big)\\
      \text{for all $\eta \in (0,v]$}
    \end{multline}
    with probability at least $1-\delta$, where $C = \max\{\log_\alpha
    \big(\frac{2vb}{\sqrt{8\alpha}}\big),0\}+e$.
\end{corollary}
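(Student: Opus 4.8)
The plan is to obtain this bound as a direct instance of Lemma~\ref{lem:pachoeffding}, applied not to the original loss $\ell$ but to the \emph{relative loss} $\ell'(X,Y,h) = \ell(X,Y,h) - \ell(X,Y,h^*)$ introduced just above the statement. The only real work is bookkeeping: checking that $\ell'$ satisfies the hypothesis of Lemma~\ref{lem:pachoeffding} and that linearity of the error functionals turns the resulting bound into the claimed form.

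First I would record the range. Since $\ell(X,Y,h) \in [0,b]$ for every hypothesis, in particular for $h^*$, the difference satisfies $\ell'(X,Y,h) \in [-b,b]$, so Lemma~\ref{lem:pachoeffding} applies to $\ell'$ with lower endpoint $-b$ and upper endpoint $b$, giving a range width of $2b$ rather than $b$. Second I would exploit linearity. Writing $R'$ and $R'_n$ for the generalisation and empirical errors associated with $\ell'$, we have $R'(h) = R(h) - R(h^*)$ and $R'_n(D,h) = R_n(D,h) - R_n(D,h^*)$, because averaging over the sample and expectation over $h \sim \rest$ are both linear, and $R(h^*)$ and $R_n(D,h^*)$ are constants as far as the $h$-integration is concerned. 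Taking $\E_{h \sim \rest}$ therefore yields
\begin{equation*}
  \E_{h \sim \rest}[R'(h)] = \E_{h \sim \rest}[R(h)] - R(h^*), \qquad \E_{h \sim \rest}[R'_n(D,h)] = \E_{h \sim \rest}[R_n(D,h)] - R_n(D,h^*).
\end{equation*}

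Finally I would substitute into \eqref{eqn:pachoeffding}. The prior $\pi$ and randomized estimator $\rest$ are unaffected by the change of loss, so the divergence term $D(\rest\|\pi)$ and the logarithmic terms transfer unchanged; the Hoeffding penalty becomes $\eta (2b)^2/8 = \eta b^2/2$; and the constant specialises to $C = \max\{\log_\alpha(\tfrac{2vb}{\sqrt{8\alpha}}),0\} + e$. Collecting these gives exactly the claimed inequality, valid for all $\eta \in (0,v]$ with probability at least $1-\delta$.

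I do not expect a genuine obstacle here, since the corollary is a specialisation rather than a new argument. The one point requiring care is the overloaded symbol $b$, which simultaneously denotes the upper endpoint of the original loss and (together with $-b$) the two endpoints of the relative loss; keeping the range width equal to $2b$ throughout is precisely what produces the factor $b^2/2$ in the variance-type penalty and the factor $2vb$ inside $C$.
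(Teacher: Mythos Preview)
Your proposal is correct and matches the paper's approach exactly: the corollary is stated in the paper as the immediate specialisation of Lemma~\ref{lem:pachoeffding} to the relative loss $\ell'$, and your bookkeeping (range $[-b,b]$, penalty $(2b)^2/8 = b^2/2$, and the corresponding $C$) is precisely what that specialisation requires.
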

And combining with Lemma~\ref{lem:pacvariance}, we get:
\begin{lemma}\label{lem:excessvariance}
  Suppose $\ell(X,Y,h) \in [0,b]$, so that $\ell'(X,Y,h) \in [-b,b]$.
  Then, for any constants $\alpha > 1$ and $v>0$, and any $\delta \in
  (0,1]$,
  \begin{multline*}
    \E_{h \sim \rest}[R(h)]-R(h^*) \leq\\ \E_{h \sim \rest}[R_n(D,h)] -
    R_n(D,h^*)
    + \eta \phi(bv) \E[\ell'(X,Y,h)^2]\\
    + \frac{\alpha}{\eta n}\Big(D(\rest\|\pi) + \ln \frac{1}{\delta} +
    \ln (\tfrac{1}{2}\log_\alpha n + C)\Big)
    \qquad\text{for all $\eta \in (0,v]$}
  \end{multline*}
  with probability at least $1-\delta$, where $C = \max\{\tfrac{1}{2}\log_\alpha
  \big(vb^2\phi(bv)/\alpha\big),0\}+e$.
\end{lemma}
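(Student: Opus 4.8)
The plan is to obtain this lemma as a direct instance of Lemma~\ref{lem:pacvariance}, applied not to $\ell$ itself but to the \emph{relative loss} $\ell'(X,Y,h) = \ell(X,Y,h) - \ell(X,Y,h^*)$, exactly as in the corollary preceding it. First I would check that $\ell'$ is an admissible loss for the PAC-Bayesian machinery: since $h^* = \argmin_{h \in \mathcal{H}} R(h)$ is defined through the generalisation error, it is a fixed (non-random) hypothesis that does not depend on the sample $D$, so $\ell'$ is a legitimate data-independent loss to which Lemma~\ref{lem:pacvariance} applies verbatim, with the same prior $\pi$ and randomized estimator $\rest$.

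Next I would verify the range and carry out the substitution. From $\ell(X,Y,h) \in [0,b]$ we get $\ell'(X,Y,h) \in [-b,b]$, so the hypothesis of Lemma~\ref{lem:pacvariance} holds with lower bound $a = -b \leq 0$ and upper bound $b$. Plugging these in, the factor $\phi(-av)$ becomes $\phi(bv)$ and $\max\{a^2,b^2\}$ collapses to $b^2$; this accounts both for the $\eta\,\phi(bv)\,\E[\ell'(X,Y,h)^2]$ term and for the stated form of the constant $C = \max\{\tfrac{1}{2}\log_\alpha(vb^2\phi(bv)/\alpha),0\}+e$.

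Finally I would translate the quantities of the relative-loss bound back into the original loss by linearity. Writing $R'(h) = \E[\ell'(X,Y,h)]$ and $R_n'(D,h) = \frac{1}{n}\sum_{i=1}^n \ell'(X_i,Y_i,h)$ for the relative generalisation and empirical errors, we have $R'(h) = R(h) - R(h^*)$ and $R_n'(D,h) = R_n(D,h) - R_n(D,h^*)$. Since the $h^*$ terms are constant with respect to $h \sim \rest$, they pull straight out of the expectation, giving $\E_{h\sim\rest}[R'(h)] = \E_{h\sim\rest}[R(h)] - R(h^*)$ and $\E_{h\sim\rest}[R_n'(D,h)] = \E_{h\sim\rest}[R_n(D,h)] - R_n(D,h^*)$. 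Substituting these into the bound from Lemma~\ref{lem:pacvariance} yields the claim.

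There is no genuinely hard step here; the result is a substitution rather than a new argument. The only point requiring a moment's care — and the one I would state explicitly — is the admissibility of $\ell'$: the entire construction hinges on $h^*$ being defined via the true risk $R$ rather than the empirical risk $R_n$, so that $\ell'$ remains independent of $D$ and the probabilistic guarantee of Lemma~\ref{lem:pacvariance} transfers unchanged.
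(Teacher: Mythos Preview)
Your proof is correct and matches the paper's approach exactly: the paper simply says ``And combining with Lemma~\ref{lem:pacvariance}, we get'' this lemma, i.e.\ apply Lemma~\ref{lem:pacvariance} to the relative loss $\ell'$ with $a=-b$. Your additional remarks on why $\ell'$ is admissible (because $h^*$ is data-independent) and how the constants specialise are fine elaborations of what the paper leaves implicit.
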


\section{Choosing the Prior and the Posterior}

Even though the names prior and posterior for $\pi$ and $\rest$ suggest
some kind of fixed relationship between the two, all the previous
results actually hold for any way of choosing these two distributions.
This is exploited in applications, in which there appear to be two main
approaches:

\paragraph{Optimal Posterior}

In the first approach, the prior $\pi$ is fixed, and the posterior
$\rest$ is chosen as the distribution that optimizes the bound. In
Lemmas~\ref{lem:pacbayeseta}--\ref{lem:excessvariance} this is always
the \emph{Gibbs distribution}
\begin{equation}\label{eqn:posterior}
  \rest(h) = \frac{e^{-\frac{\eta}{\alpha} nR_n(D,h)} \pi(h)}{\int
  e^{-\frac{\eta}{\alpha} n R_n(D,h')}
  \pi(h') \der h'}.
\end{equation}

\paragraph{Localised Priors}

By contrast, in the second approach the posterior $\rest$ is fixed, and
then the prior $\pi$ is chosen to (almost) optimize the bound. This way
of selecting $\pi$ was developed by Catoni \cite{Catoni2007}, who refers
to such $\pi$ as \emph{localised priors}. For given $\rest$, the prior
that exactly optimizes the bound\footnote{At a NIPS 2013 workshop David
McAllester referred to this as ``Langford's prior'', because apparently
John Langford already observed that it optimized the bound 13 years ago,
but I don't have a reference.} is
\begin{equation}\label{eqn:optimalprior}
  \pi(h) = \E_D[\rest(h)],
\end{equation}
but when the posterior takes the form \eqref{eqn:posterior} another
common choice, for which the bound becomes easier to manipulate, is the
prior $\pi'$ defined by
\begin{equation*}
  \pi'(h) = \frac{e^{-\frac{\eta}{\alpha} nR(h)}\pi(h)}{\int
  e^{-\frac{\eta}{\alpha}
  nR(h')}\pi(h')\der h'}
\end{equation*}
for $\pi$ from the definition of $\rest$.

\begin{remark}
  For given prior, the posterior \eqref{eqn:posterior} minimizes the
  bound, and, for given posterior, the prior \eqref{eqn:optimalprior}
  minimizes the bound. Since both steps reduce the bound, we can
  conceivably iterate them until convergence. I wonder whether there
  exists any stability point $\pi$ such that
  \begin{equation*}
    \pi(h) = \E_D\Big[\frac{e^{-\frac{\eta}{\alpha} nR_n(D,h)} \pi(h)}{\int
    e^{-\frac{\eta}{\alpha}
    nR_n(D,h')}
  \pi(h') \der h'}\Big],
  \end{equation*}
  and, if so, whether it is unique.
\end{remark}

\section{Summary}

We have seen how PAC-Bayesian inequalities naturally extend standard
concentration inequalities based on the Cram\'er-Chernoff method by
generalising the union bound to a continuous version. There are some
technicalities involved if we want to optimize over $\eta$, but these
can be managed if we can find a good lower bound on the value of the
optimizing $\eta$. I have not discussed any applications, for which I
will have to refer to the references discussed next.

\section{Further Reading}

I learned about PAC-Bayesian concentration inequalities by discussions
with Peter Gr\"unwald about papers by Zhang \cite{Zhang2006}, and by
reading the (quite technical) monograph of Catoni \cite{Catoni2007}. For
a much more accessible presentation of Catoni's idea of localised priors
and their applications, see the recent paper by Lever, Laviolette and
Shawe-Taylor \cite{LeverLST2013}. McAllester also has a recent tutorial
\cite{McAllester2013}, which includes an application to analysing
drop-out. Except for the connection to standard concentration
inequalities, which is probably well known, but which I have not seen
emphasised before, all the results I have presented here can be found
(more or less) in these references. For more advanced concentration
inequalities based on the Cram\'er-Chernoff method, I also highly
recommend the recent textbook by Boucheron, Lugosi and Massart
\cite{BoucheronLM2013}, which I am sure will be a classic.

\bibliographystyle{abbrv}
\bibliography{pacbayes}

\begin{thebibliography}{1}

\bibitem{BoucheronLM2013}
S.~Boucheron, G.~Lugosi, and P.~Massart.
\newblock {\em Concentration Inequalities: A Nonasymptotic Theory of
  Independence}.
\newblock Oxford University Press, 2013.

\bibitem{Catoni2007}
O.~Catoni.
\newblock {PAC}-{B}ayesian supervised classification: The thermodynamics of
  statistical learning.
\newblock IMS Lecture Notes --- Monograph Series, Volume 56, 2007.

\bibitem{CesaBianchiLugosi2006}
N.~Cesa-Bianchi and G.~Lugosi.
\newblock {\em Prediction, learning, and games}.
\newblock Cambridge University Press, 2006.

\bibitem{LeverLST2013}
G.~Lever, F.~Laviolette, and J.~Shawe-Taylor.
\newblock Tighter {PAC}-{B}ayes bounds through distribution-dependent priors.
\newblock {\em Theoretical Computer Science}, 473:4--28, 2013.

\bibitem{McAllester2013}
D.~McAllester.
\newblock A {PAC}-{B}ayesian tutorial with a dropout bound.
\newblock {\em Preprint posted on the CS arXiv, arXiv:1307.2118 [cs.LG]}, 2013.

\bibitem{VanErvenGrunwaldReidWilliamson2012}
T.~van Erven, P.~D. Gr{\"u}nwald, M.~D. Reid, and R.~C. Williamson.
\newblock Mixability in statistical learning.
\newblock In {\em Advances in Neural Information Processing Systems 25 (NIPS
  2012)}, 2012.

\bibitem{Zhang2006}
T.~Zhang.
\newblock From $\epsilon$-entropy to {KL}-entropy: Analysis of minimum
  information complexity density estimation.
\newblock {\em The Annals of Statistics}, 34(5):2180--2210, 2006.

\end{thebibliography}

\end{document}